\renewcommand{\vec}[1]{\mathbf{#1}}  
\newcommand{\bs}{\vec} 
\newtheorem{theorem}{Theorem}[section]
\newtheorem{lemma}[theorem]{Lemma}
\newcommand{\argmax}{\arg\max}
\newcommand{\bX}{\mathbf{X}}
\newcommand{\bx}{\mathbf{x}}
\newcommand{\bH}{\mathbf{H}}
\newcommand{\bh}{\mathbf{h}}
\newcommand{\by}{\mathbf{y}}
\newcommand{\bE}{\bs{E}}
\newcommand{\bA}{\bs{A}}
\newcommand{\mL}{\mathcal{L}}
\newcommand{\tabincell}[2]{\begin{tabular}{@{}#1@{}}#2\end{tabular}}
\newlength{\halfpagewidth}
\title{Label-aware Double Transfer Learning for Cross-Specialty\\Medical Named Entity Recognition}
\author{Zhenghui Wang$^\dag$, Yanru Qu$^\dag$, Liheng Chen$^\dag$, Jian Shen$^\dag$, Weinan Zhang$^\dag$\thanks{\ \  Weinan Zhang is the corresponding author.}\\
	\textbf{Shaodian Zhang$^{\dag\ddag}$, Yimei Gao$^\ddag$, Gen Gu$^\ddag$, Ken Chen$^\ddag$, Yong Yu$^\dag$} \\ 
	$^\dag$APEX Data and Knowledge Management Lab, Shanghai Jiao Tong University\\
	$^\ddag$Synyi LLC.  \\
	{\tt \{felixwzh,wnzhang,shaodian\}@apex.sjtu.edu.cn}\\
	{\tt chen.ken@synyi.com}
}
\begin{document}
	\maketitle
	\begin{abstract}
		We study the problem of named entity recognition (NER) from electronic medical records, which is one of the most fundamental and critical problems for medical text mining. Medical records which are written by clinicians from different specialties usually contain quite different terminologies and writing styles. The difference of specialties and the cost of human annotation makes it particularly difficult to train a universal medical NER system. In this paper, we propose a label-aware double transfer learning framework (La-DTL) for cross-specialty NER, so that a medical NER system designed for one specialty could be conveniently applied to another one with minimal annotation efforts. The transferability is guaranteed by two components: (i) we propose label-aware MMD for feature representation transfer, and (ii) we perform parameter transfer with a theoretical upper bound which is also label aware. We conduct extensive experiments on 12 cross-specialty NER tasks. The experimental results demonstrate that La-DTL provides consistent accuracy improvement over strong baselines. Besides, the promising experimental results on non-medical NER scenarios indicate that La-DTL is potential to be seamlessly adapted to a wide range of NER tasks.
	\end{abstract}
	
	\section{Introduction}
	The development of hospital information system and medical informatics drives the leverage of various medical data for a more efficient and intelligent medical care service.
	Among many kinds of medical data, electronic health records (EHRs) are one of the most valuable and informative data as they contain detailed information about the patients and the clinical practices.
	EHRs are essential to many intelligent clinical applications, such as hospital quality control and clinical decision support systems \cite{wu2015named}. 
	Most of EHRs are recorded in an unstructured
	form, i.e., natural language. 
	Hence, extracting structured information from EHRs using natural language processing (NLP), e.g., named entity recognition (NER) and entity linking, plays a fundamental role in medical informatics \cite{zhang2013unsupervised}.
	In this paper, we focus on medical NER from EHRs, which is a fundamental task and is widely studied in the research community \cite{nadeau2007survey,uzuner20112010}. 
	
	In practice, the difficulty of building a universally robust and high-performance medical NER system lies in the variety of medical terminologies and expressions among different departments of specialties and hospitals.
	However, building separate NER systems for so many specialties comes with a prohibitively high cost. 
	The data privacy issue further discourages the sharing of the data across departments or hospitals, making it more difficult to train a canonical NER system to be applied everywhere.
	This raises a natural question: if we have sufficient annotated EHRs data in one \textit{source} specialty, can we distill the knowledge and transfer it to help training models in a related \textit{target} specialty with few annotations? By transferring the knowledge we can achieve higher performance in target specialties with lower annotation cost and bypass the data sharing concerns. This is commonly referred to as \textit{transfer learning} \cite{pan2010survey}.

	Current state-of-the-art transfer learning methods for NER are mainly based on deep neural networks, which perform an end-to-end training to distill sequential dependency patterns in the natural language \cite{ma2016end,lample2016neural}.
	These transfer learning methods
	include (i) feature representation transfer \cite{peng2017multi,kulkarni2016domain}, which normally leverages deep neural networks to learn a close feature mapping between the source and target domains,
	and (ii) parameter transfer \cite{murthy2016sharing,yang2017transfer}, which performs parameter sharing or joint training to get the target-domain model parameters close to those of the source-domain model.
	To the best of our knowledge, there is no previous literature working on transfer learning for NER in the medical domain, or even in a larger scope, i.e., medical natural language processing.

	In this paper, we propose a novel NER transfer learning framework, namely label-aware double transfer learning (La-DTL): 
	(i) We leverage bidirectional long-short term memory (Bi-LSTM) network \cite{graves2005framewise} 
	to automatically learn the text representations, based on which we perform a label-aware feature representation transfer. We propose a variant of maximum mean discrepancy (MMD) \cite{gretton2012kernel}, namely label-aware MMD (La-MMD), to explicitly reduce the domain discrepancy of feature representations of tokens with the same label between two domains.
	(ii) Based on the learned feature representations from Bi-LSTM, two conditional random field (CRF) models are performed for sequence labeling for source and target domain separately, where parameter transfer learning is performed. Specifically, an upper bound of KL divergence between the source and target domain's CRF label distributions is added over the emission and transition matrices across the source and target CRF models to explore the shareable parts of the parameters. 
	Both (i) and (ii) have a label-aware characteristic, which will be discussed later.
	We further argue that label-aware characteristic is crucial for transfer learning in sequence labeling problems, e.g., NER, because only when the corresponding labels are matched, can the ``similar'' contexts (i.e. feature representation) and model parameters be efficiently borrowed to improve the label prediction.

	Extensive experiments are conducted on 12 cross-specialty medical NER tasks with real-world EHRs. The experimental results demonstrate that La-DTL provides consistent accuracy improvement over strong baselines, with overall 2.62\% to 6.70\% absolute F1-score improvement over the state-of-the-art methods. Besides, the promising experimental results on other two non-medical NER scenarios indicate that La-DTL has the potential to be seamlessly adapted to a wide range of NER tasks.

	\section{Related Works}
	\noindent
	{\bf Named Entity Recognition }(NER) is fundamental in information extraction area which aims at automatic detection of named entities (e.g., person, organization, location and geo-political) in free text \cite{marrero2013named}. Many high-level applications such as entity linking \cite{moro2014entity} and knowledge graph construction \cite{hachey2011graph} could be built on top of an NER system. 
	Traditional high-performance approaches include conditional random fields models (CRFs) \cite{lafferty2001conditional}, maximum entropy Markov models (MEMMs) \cite{mccallum2000maximum}  and hidden Markov models (HMMs).  Recently, many neural network-based models have been proposed \cite{collobert2011natural,chiu2015named,ma2016end,lample2016neural}, in which few feature engineering works are needed to train a high-performance NER system. The architecture of those neural network-based models are similar, where different neural networks (LSTMs, CNNs) at different levels (char- and word-level) are applied to learn feature representations, and on top of neural networks, a CRF model is employed to make label predictions.

	\noindent
	\textbf{Transfer Learning} distills knowledge from a source domain to help create a high-performance learner for a target domain. 
	Transfer learning algorithms are mainly categorized into three types, namely instance transfer, feature representation transfer and parameter transfer \cite{pan2010survey}. 
	Instance transfer normally samples or re-weights source-domain samples to match the distribution of the target domain \cite{chen2011co,chu2013selective}.
	Feature representation transfer typically learns a feature mapping which projects source and target domain data simultaneously onto a common feature space following similar distributions
	\cite{zhuang2015supervised,long2015learning,shen2017wasserstein}.
	Parameter transfer normally involves a joint or constrained training for the models on source and target domains, usually introduce connections between source target parameters via sharing \cite{srivastava2013discriminative}, initialization \cite{perlich2014machine}, or inter-model parameter penalty schemes \cite{zhang2016collective}.

	\noindent
	\textbf{Transfer Learning for NER}
	Training a high-performance NER system requires expensive and time-consuming manually annotated data. 
	But sufficient labeled data is critical for the generalization of an NER system, especially for neural network-based models.
	Thus, transfer learning for NER is a practically important problem.
	The first group of methods focuses on sharing model parameters but they differ in the training schemes. 
	\citet{he2017unified} proposed to train the parameter-shared model with source and target data jointly, while the learning rates for sentences from source domain are re-weighted by the similarity with target domain corpus. \citet{yang2017transfer} proposed a family of frameworks which share model parameters in hierarchical recurrent networks to handle cross-application, cross-lingual, and cross-domain transfer in sequence labeling tasks. 
	Differently, \citet{lee2017transfer} first trained the model with source domain data and then fine-tuned the model with little annotated target domain data. 
	
	Domain adaptation method has been well studied in NER scenarios such as using distributed word representations \cite{kulkarni2016domain} and leveraging rule-based annotators \cite{chiticariu2010domain}. Multi-task learning has also been studied to improve performance in multiple NER tasks by transferring meaningful knowledge from other tasks \cite{collobert2011natural,peng2016improving}. To take the advantages of both domain adaptation and multi-task learning, \citet{peng2017multi} proposed a multi-task domain adaptation model.

	\section{Preliminaries}
	This section briefly introduces bidirectional LSTM, conditional random field and maximum mean discrepancy, which are the building blocks of our transfer learning framework.

	\noindent
	{\bf Bidirectional LSTM }
	Recurrent neural networks (RNNs) are widely used in NLP tasks for their great capability to capture contextual information in sequence data.  
	A widely used variant of RNNs is long short-term memory (LSTM) \cite{hochreiter1997long}, which incorporates input and forget gates to capture both long and short term dependencies. 
	Furthermore, it will be beneficial if we process the sequence in not only a forward but also a backward way. 
	Thus, bidirectional LSTM (Bi-LSTM) was employed in many previous works \cite{chiu2015named,ma2016end,lample2016neural} to capture bidirectional information in a sequence. 
	More specifically, for token $\vec{x}_t$ (embedding vector) at timestep $t$ in sequence 
	$\vec{X}=(\vec{x}_{1},\vec{x}_{2},...,\vec{x}_{n})$, the $\theta_b$-parameterized Bi-LSTM recurrently updates  hidden vectors ${\vec{h}_t^{\rightarrow}}=G^f_{\theta_b}(\vec{X},\vec{h}_{t-1}^{\rightarrow})$ and ${\vec{h}_t^{\leftarrow}}=G^b_{\theta_b}(\vec{X},\vec{h}_{t+1}^{\leftarrow})$ produced by a forward LSTM and a backward one, respectively. 
	Then we concatenate ${\vec{h}_t^{\rightarrow}}$ and ${\vec{h}_t^{\leftarrow}}$ to $\vec{h}_t$ as the final hidden vector produced by Bi-LSTM:
	
	\begin{small}
		\begin{equation}
		\vec{h}_t = {\vec{h}_t^{\rightarrow}} \oplus {\vec{h}_t^{\leftarrow}}.\nonumber
		\end{equation}
	\end{small}The representations learned from Bi-LSTM for sequence $\vec{X}$ is thus denoted as $\vec{H} = (\vec{h}_{1},\vec{h}_{2},...,\vec{h}_{n})$.

	\noindent{\bf Conditional Random Field }
	The goal of NER is to detect named entities in a sequence $\vec{X}$ by predicting a sequence of labels $\vec{y} = (y_{1},y_{2},...,y_{n})$. Conditional random field (CRF) is widely used to make joint labeling of the tokens in a sequence \cite{lafferty2001conditional}. 
	
	Recently, \citet{lample2016neural} proposed to build a CRF layer on top of a Bi-LSTM so that the automatically learned feature representation $\vec{H}=(\vec{h}_{1},\vec{h}_{2},...,\vec{h}_{n})$ of the sequence can be directly fed into the CRF for sequence labeling.
	For a sequence of labels $\vec{y}$, given the hidden vector sequence $\vec{H}$, we define its $\theta_c$-parametrized score function $s_{\theta_c}(\vec{H},\vec{y})$ as:
	
	\begin{small}
		\begin{equation}
		s_{\theta_c}(\vec{H},\vec{y}) = \sum_{i=1}^{n}\bs{E}_{i,y_i} + \sum_{i=1}^{n-1}\bs{A}_{y_i,y_{i+1}},\nonumber
		\end{equation}
	\end{small}where $\bs{E}$ is the emission score matrix of size $n \times m$ ($m$ is the number of unique labels), and is computed by 
	$\bs{E} = \vec{H}\bs{W}$ where $\bs{W}$ is the label emission parameter matrix; $\bs{A}$ is the label transition parameter matrix; thus $\theta_c = \{\bs{W}, \bs{A}\}$.
	We then define the conditional probability of label sequence $\vec{y}$ given $\vec{H}$ by a softmax over all possible label sequences in set $\mathcal{Y}(\vec{H})$ as:
	
	\begin{small}
		\begin{align}\label{tab:crf-eq}
		p_{\theta_c}(\vec{y}|\vec{H})=&\exp\{s_{\theta_c}(\vec{H},\vec{y})\}/Z(\vec{H}) \\
		= &\exp\{s_{\theta_c}(\vec{H},\vec{y})\} \Big/ \sum_{\vec{y}'\in\mathcal{Y}(\vec{H})}\exp\{s_{\theta_c}(\vec{H},\vec{y}')\},\nonumber 	
		\end{align}
	\end{small}where $\theta_c$ is omitted for simplification in the following part. 
	The training objective in the CRF layer is to maximize the log-likelihood $\max_{\theta_c}\log p(\vec{y}|\vec{H})$. 
	In the label prediction phase, we give the output label sequence $\vec{y}^*$ with the highest conditional probability $\vec{y}^*=\argmax_{\vec{y}'\in\mathcal{Y}(\vec{H})} p(\vec{y}'|\vec{H})$ by dynamic programming \cite{sutton2012introduction}.

	\noindent
	{\bf Maximum Mean Discrepancy }
	Maximum Mean Discrepancy \cite{gretton2012kernel} is a non-parametric test statistic to measure the distribution discrepancy in terms of the distance between the kernel mean embeddings of two distributions $p$ and $q$. The MMD is defined in particular function spaces that witness the difference in distributions
	
	\begin{small}
		\begin{equation} 
		\text{MMD}(\mathcal{F},p,q) = \sup_{f \in \mathcal{F}}(\mathbb{E}_{x \sim p}[f(x)] - \mathbb{E}_{y \sim q}[f(y)]) .\nonumber
		\end{equation}
	\end{small}By defining the function class $\mathcal{F}$ as the unit ball in a universal Reproducing Kernel Hilbert Space (RKHS), denoted by $\mathcal{H}$, it holds that $\text{MMD}[\mathcal{F},p,q] = 0$ if and only if $p=q$. And then given two sets of samples $X = \{x_1,...,x_m\}$ and $Y = \{y_1,...,y_n\}$ independently and identically distributed (i.i.d.) from $p$ and $q$ on the data space $\mathcal{X}$, the empirical estimate of MMD can be written as the distance between the empirical mean embeddings after mapping to RKHS 
	
	\begin{small}
		\begin{equation} 
		\text{MMD}(X,Y) = \Big\| \frac{1}{m}\sum_{i=1}^{m} \phi(x_i) - \frac{1}{n}\sum_{j=1}^{n} \phi(y_j) \Big\|_\mathcal{H},  \label{eq:MMD} 
		\end{equation} 
	\end{small}where $\phi(\cdot): \mathcal{X} \rightarrow \mathcal{H}$ is the nonlinear feature mapping that induces $\mathcal{H}$.

	\begin{figure}[t]
		\centering
		\includegraphics[width=0.33\textwidth]{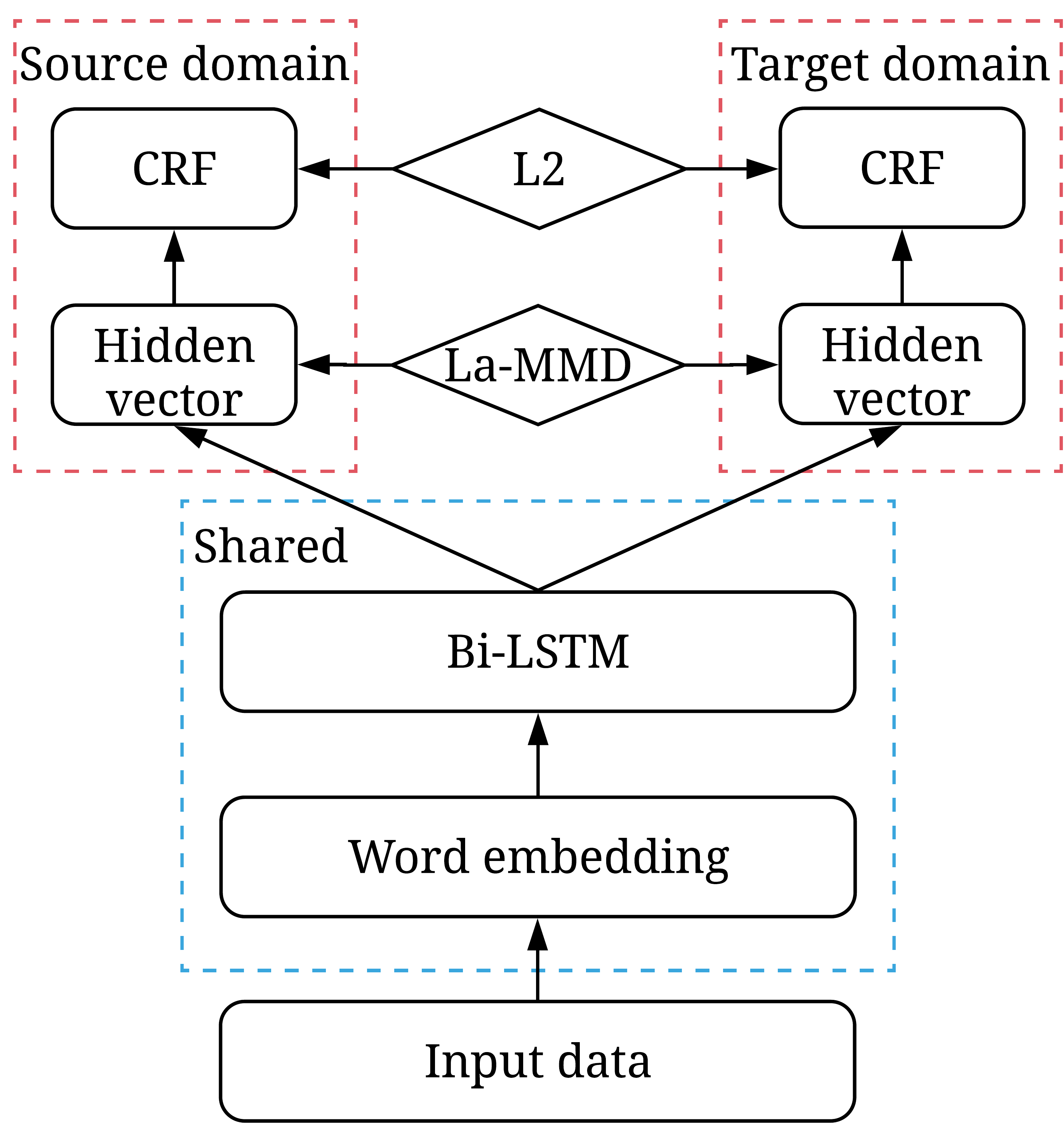}
		\caption{La-DTL framework overview: embedding and Bi-LSTM layers are shared across domains, predictors in red (upper) boxes are task-specific CRFs, with label-aware MMD and L2 constraints to perform feature representation transfer and parameter transfer.
		}
		\label{fig:overview_of_model}
	\end{figure}

	\section{Methodology}
	
	In this section, we present a label-aware double transfer learning (La-DTL) framework and discuss its rationale.

	\subsection{Framework Overview} \label{sec:4-1} 
	Figure~\ref{fig:overview_of_model} gives an overview of La-DTL for NER. 
	From bottom up, each input sentence is converted into a sequence of embedding vectors, which are then fed into a Bi-LSTM to sequentially encode contextual information into fixed-length hidden vectors. The embedding and Bi-LSTM layers are shared among source/target domains. With label-aware maximum mean discrepancy (La-MMD) to reduce the feature representation discrepancy between two domains, the hidden vectors are directly fed into source/target domain specific CRF layers to predict the label sequence. 
	We use domain constrained CRF layers to enhance the target domain performance.

	More formally, let $\mathcal{D}_s={\{(\bX^s_i,\by^s_i)\}}^{N^s}_{i=1}$ be the training set of $N^s$ samples from the source domain
	and $\mathcal{D}_t={\{(\bX^t_i,\by^t_i)\}}^{N^t}_{i=1}$ be the training set of $N^t$ samples from the target domain, with $N^t \ll N^s$. 
	Bi-LSTM encodes a sentence 
	$\bX=(\bx_{1},\bx_{2},...,\bx_{n})$ to hidden vectors $\bH = (\bh_{1},\bh_{2},...,\bh_{n})$.
	We occasionally use $\bH(\bX)$ to denote the corresponding hidden vectors when feeding $\bX$ into the Bi-LSTM.  
	CRF decodes hidden vectors $\bH$ to a label sequence $\hat{\by}=(\hat{y}_{1},\hat{y}_{2},...,\hat{y}_{n})$. Our goal is to improve label prediction accuracy on the target domain $\mathcal{D}_t$ by utilizing the knowledge from the source domain $\mathcal{D}_s$:
	
	\begin{small}
		\begin{align}
		p(\by | \bX) = & p(\by | \bH(\bX)),\nonumber \\
		\log p(\by | \bH) = & \sum_{i=1}^n \bE_{i,y_i} + \sum_{i=1}^{n-1} \bA_{y_i,y_{i+1}} -  \log Z(\bH). \label{eq:pyh}
		\end{align}
	\end{small}Thus training a transferable model $p(\by|\bX)$ requires both $\bH(\bX)$ and $p(\by|\bH)$
	to be transferable.

	We use share word embedding and Bi-LSTM by approaching the feature representation distributions $p(\bh | \mathcal{D}_s)$ and $p(\bh | \mathcal{D}_t)$, i.e., the distributions of Bi-LSTM hidden vectors at each timestep of the sentences from the source and target domains respectively. 
	The rationale behind it lies on the insufficiency of labeled target data.
	Even though LSTM has high capacity, its generalization ability highly relies on viewing ``sufficient'' data. Otherwise, LSTM is very likely to overfit the data. Training on both source and target data, the Bi-LSTM is expected to learn feature representations with high quality. \citet{yosinski2014transferable} provided a justification of this solution that sharing bottom layers is promising for transfer learning in practice.
	
	With the sentences projected onto the same hidden space, the conditional distribution $p(\bh^s | \mathcal{D}_s)$ and $p(\bh^t | \mathcal{D}_t)$, however, may be distant because LSTM hidden vectors contain contextual information which is different across domains. In order to reduce source/target discrepancy, we refine MMD \cite{gretton2012kernel} with label constraints, i.e., label-aware MMD (La-MMD).
	Using La-MMD, the source/target hidden states are pushed to similar distributions to make the feature representation $\bH(\bX)$ transfer feasible.
	
	Based on the hidden vectors from Bi-LSTM, we adopt independent CRF layers for each domain. The rationale lies in the hypotheses that (i) the target domain predictor can better capture target data distribution which could be very unique; (ii) a good predictor trained on the source domain directly could be leveraged to assist the target domain predictor without directly borrowing the source domain training data to bypass the data privacy issue. 
	With respect to the emission and transition score matrices $\sum\bE_{i,y_i}$ and $\sum\bA_{y_i,y_{i+1}}$, we adopt an upper bound between source/target  domains, which helps the target domain predictor to be guided by the source domain predictor. Thus $p(\by|\bH)$ is also transferable.
	
	There are also other transfer methods, including fine-tuning, sharing parameter directly (without constraints) \cite{he2017unified,lee2017transfer,yang2017transfer}, etc. 
	However, simply sharing models may dismiss target specific instances. 
	
	\subsection{Learning Objective}
	The learning objective is to minimize the following loss $\mathcal{L}$ with respect to parameters $\bs{\Theta}=\{\bs{\theta}_b,\bs{\theta}_c\}$:
	
	\begin{small}
		\begin{equation}
		\mL=\mL_{c}+\alpha\ \mL_\text{La-MMD}+\beta\ \mL_{p}+\gamma\ \mL_{r} \label{eq:combined-loss}, \nonumber
		\end{equation}
	\end{small}where $\mL_c$ is the CRF loss, $\mL_\text{La-MMD}$ is the La-MMD loss, $\mL_p$ is the parameter similarity loss on CRF layers, and $\mathcal{L}_r$ is the regularization term, with $\alpha,\beta,\gamma$ as hyperparameters to balance loss terms.
	
	The CRF loss is our ultimate objective predicting the label sequence given the input sentence, i.e., we minimize the negative log-likelihood of training samples from both source/target domains:
	
	{\small
		\begin{align}
		\mathcal{L}_c= 
		- \frac{\varepsilon}{N^s}\sum_{i=1}^{N^s}\log p(\vec{y}_i^s|\bH_i^s)-\frac{1-\varepsilon}{N^t}\sum_{i=1}^{N^t}\log p(\vec{y}_i^t|\bH_i^t), \nonumber 
		\end{align}
	}where $\bH$ are hidden vectors obtained from  Bi-LSTM, $\varepsilon$ is the balance coefficient. The La-MMD loss $\mL_\text{La-MMD}$ and parameter similarity loss $\mL_{p}$  are discussed in Section \ref{sec:4-3} and \ref{sec:4-4}, respectively. The regularization term is to generally control overfitting:
	
	\begin{small}
		\begin{equation}
		\mathcal{L}_r= \|{\theta}_b\|^2_2 + \|\theta_c\|^2_2 .\nonumber
		\end{equation}
	\end{small}
	
	We will provide the model convergence and hyperparameter study in Section \ref{sec:5-1}.

	\begin{figure}[t] 
		\includegraphics[width=0.45\textwidth]{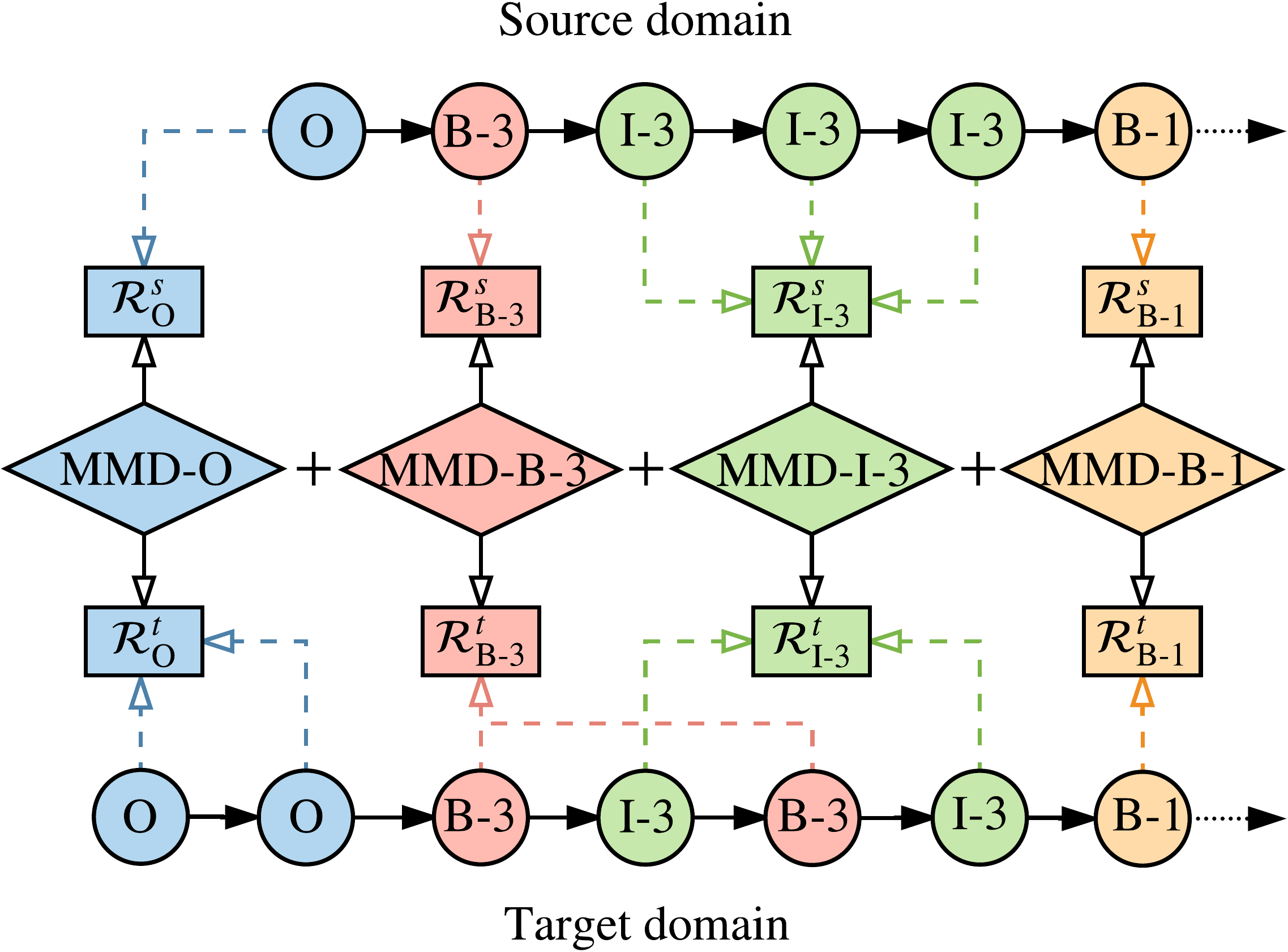} 
		\caption{Illustration for La-MMD. MMD-$y$ is computed between two domains' hidden representations with the same ground truth label $y$. A linear combination is then applied to each label-wise MMD to form La-MMD and the coefficient is set as $\mu_y=1$.} 
		\label{fig:mmd} 
	\end{figure}
	
	\subsection{Bi-LSTM Feature Representation Transfer}\label{sec:4-3}
	To learn transferable feature representations, the maximum mean discrepancy (MMD) which measures the distance between two distributions, has been widely used in domain adaptation scenarios \cite{long2015learning,rozantsev2016beyond}.
	Almost all these works focus on reducing the \emph{marginal} distribution distance between different domain features in an unsupervised manner to make them indistinguishable. However, considering a word is not evenly distributed conditioning on different labels, it may result in that the discriminative property of features from different domains may not be similar, which means that close source and target samples may not have the same label. Different from previous works, we propose label-aware MMD (La-MMD) in Eq.~(\ref{eq:La-MMD}) to explicitly reduce the discrepancy between hidden representations with the same label, i.e., the linear combination of the MMD for each label. For each label class $y\in\mathcal{Y}_v$, where $\mathcal{Y}_v$ is the set of matched labels in two domains, we compute the squared population MMD between the hidden representations of source/target samples with the same label $y$:
	
	\resizebox{0.95\columnwidth}{!}{
		\hspace{-0.6cm}
		\begin{minipage}{1.1\columnwidth}
			
			\begin{small}
				\begin{align}
				\text{MMD}^ 2(\mathcal{R}_{y}^s,\mathcal{R}_{y}^t) 
				=& \frac{1}{({N_y^s})^2}\sum_{i,j=1}^{N_y^s} k(\bh^s_i,\bh^s_j)  + \frac{1}{(N_y^t)^2}\sum_{i,j=1}^{N_y^t} k(\bh^t_i,\bh^t_j) \nonumber \\ 
				&- \frac{2}{N_y^s N_y^t}\sum_{i,j=1}^{N_y^s,N_y^t} k(\bh^s_i,\bh^t_j), \label{eq:single-MMD} 
				\end{align}
			\end{small}
		\end{minipage}}
		where $\mathcal{R}_{y}^s$ and $\mathcal{R}_{y}^t$ are sets of hidden representation $\bh^s$ and $\bh^t$ with corresponding number $N^s_y$ and $N^t_y$. 
		Eq.~(\ref{eq:single-MMD}) can be easily derived by casting Eq.~(\ref{eq:MMD}) into inner product form and applying $\langle \phi(x), \phi(y) \rangle _\mathcal{H} = k(x, y) $ where $k$ is the reproducing kernel function \cite{gretton2012kernel}. 
		For each label class, we compute the MMD loss in a normal manner. After that, we define the La-MMD loss as:
		
		\begin{small}
			\begin{equation}
			\mathcal{L}_\text{La-MMD}=\sum_{y\in\mathcal{Y}_v}\mu_y\cdot\text{MMD}^2(\mathcal{R}_{y}^s,\mathcal{R}_{y}^t),\label{eq:La-MMD} 
			\end{equation}
		\end{small}where $\mu_y$ is the corresponding coefficient. The illustration of La-MMD is shown in Figure~\ref{fig:mmd}.
		
		Once we have applied this La-MMD to our representations learned from Bi-LSTM, the representation distribution of instances with the same label from different domains should be close. Then the standard CRF layer which has a simple linear structure takes these similar representations as input and is likely to give a more transferable label decision for instances with the same label.
		
		\subsection{CRF Parameter Transfer}\label{sec:4-4}
		
		Simply sharing the CRF layer is non-promising when source/target data are diversely distributed. According to probability decomposition in Eq.~(\ref{eq:pyh}), in order to transfer on source/target CRF layers, more specifically, $p(\by|\bH)$,
		we reduce the KL divergence from $p^t(\by|\bH)$ to $p^s(\by|\bH)$. But directly reducing $D_{\text{KL}}(p^s(\by|\bH)||p^t(\by|\bH))$ is intractable, we tend to reduce its upper bound: 
		
		{\small
			\begin{align}
			&D_{\text{KL}}(p^s(\by|\bH)||p^t(\by|\bH))\nonumber\\
			=&\sum_{\by\in\mathcal{Y}(\bH)} p^s(\by|\bH) \log (\frac{p^s(\by|\bH)}{p^t(\by|\bH)})\nonumber\\
			=&-H(p^s(\by|\bH))-\sum_{\by\in\mathcal{Y}(\bH)} p^s(\by|\bH) \log p^t(\by|\bH) \nonumber\\
			\le  
			&c(\Arrowvert\bs{W}^s-\bs{W}^t\Arrowvert^2_2+\Arrowvert\bs{A}^s-\bs{A}^t\Arrowvert^2_2 )^{\frac{1}{2}}, \label{eq:tl-kl-bound}
			\end{align}
		}where $H(\cdot)$ is the entropy of distribution $(\cdot)$ and $c$ is a constant. The detailed proof is provided in Appendix~\ref{sec:a}. Since $c(\Arrowvert\bs{W}^s-\bs{W}^t\Arrowvert^2_2+\Arrowvert\bs{A}^s-\bs{A}^t\Arrowvert^2_2 )$ is the upper bound of $D_{\text{KL}}(p^s(\by|\bH)\|p^t(\by|\bH))$, we conduct CRF parameter transfer 
		by minimizing
		
		\begin{small}
			\begin{equation}
			\mathcal{L}_p={\|\bs{W}^s-\bs{W}^t\|}^2_2+
			{\|\bs{A}^s-\bs{A}^t\|}^2_2. \nonumber
			\end{equation}
		\end{small}It turns out that a similar regularization term is applied in our CRF parameter transfer method and the regularization framework (RF) for domain adaptation \cite{lu2016A}. However, RF is proposed to generalize the feature augmentation method in \cite{Daume2007Frustratingly}, and these two methods are only discussed from a perspective of the parameter. There is no guarantee that two models having similar parameters yields similar output distributions. In this work, we discuss the model behavior in CRF conditions, and we successfully prove that two CRF models having similar parameters (in Euclidean space) yields similar output distributions. In another word, our method guarantees transferability in the model behavior level, while previous works are limited in parameter level.

		The CRF parameter transfer is illustrated in Figure~\ref{fig:crf}, which is also label-aware since the L2 constraint is added over parameters corresponding to the same label in two domains, e.g., $\vec{W}^s_O$ and $\vec{W}^t_O$.
		
		\begin{figure}[t] 
			\centering 
			\includegraphics[width=0.95\columnwidth]{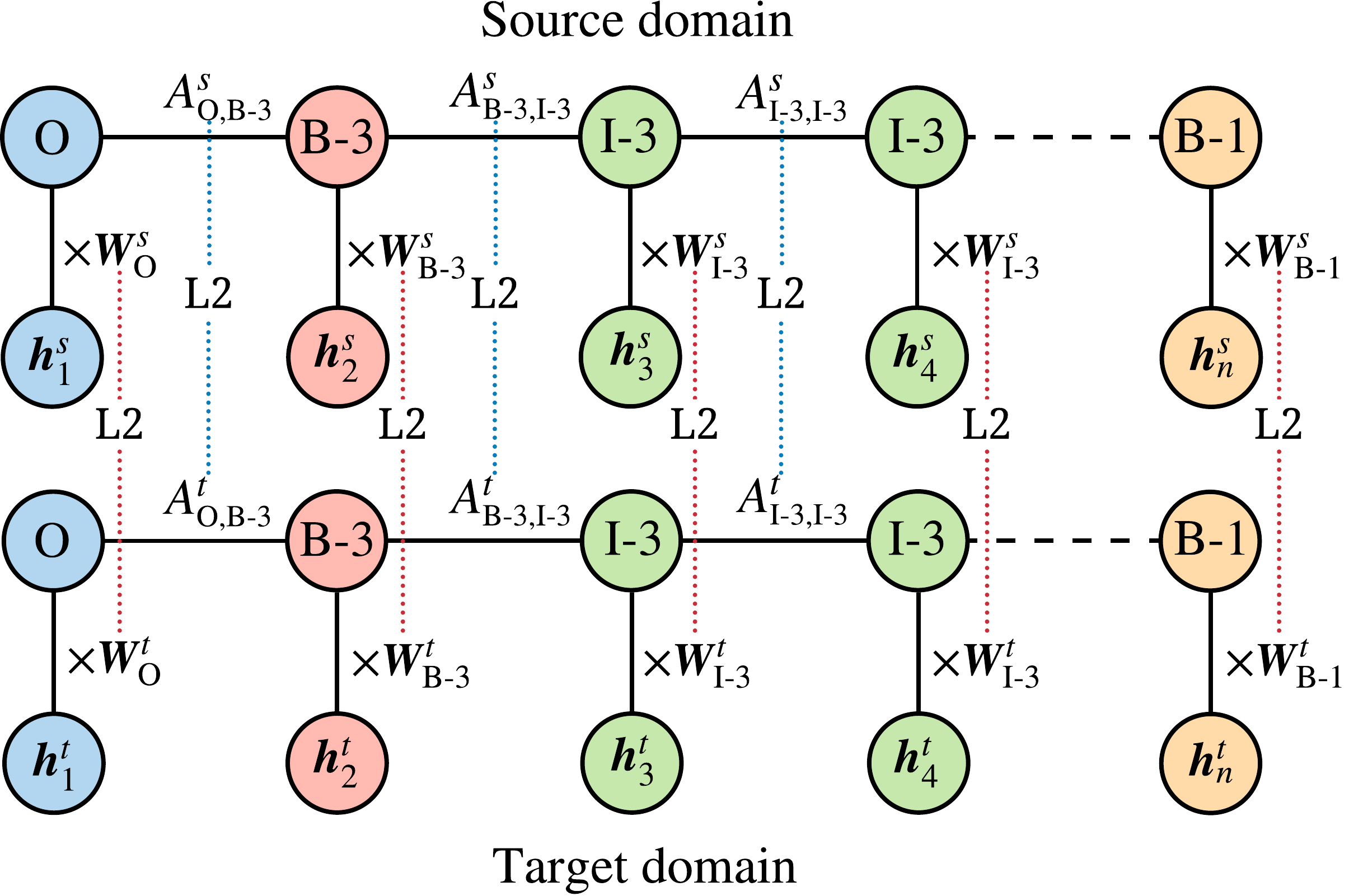} 
			\caption{Illustration for CRF parameter transfer.} 
			\label{fig:crf} 
		\end{figure}
		
		\subsection{Training}
		We train La-DTL in an end-to-end manner with mini-batch AdaGrad \cite{duchi2011adaptive}. One mini-batch contains training samples from both domains, otherwise the computation of $\mathcal{L}_{\text{La-MMD}}$ can not be performed. During training, word (and character) embeddings are fine-tuned to adjust real data distribution. During both training and decoding (testing) of CRF layers, we use dynamic programming to compute the normalizer in Eq.~(\ref{tab:crf-eq}) and infer the label sequence.

		\section{Experiments}
		In this section,  we evaluate La-DTL\footnote{\url{https://github.com/felixwzh/La-DTL}} and other baseline methods on 12 cross-specialty NER problems based on real-world datasets. The experimental results show that La-DTL steadily outperforms other baseline models in all tasks significantly. We also conduct further ablation study and robustness study. 
		We evaluate La-DTL on two more non-medical NER transfer tasks to validate its general efficacy over a wide range of applications.

		\subsection{Cross-Specialty NER} \label{sec:5-1}
		
		{\bf Datasets }
		We collected a Chinese medical NER (\textit{CM-NER}) corpus for our experiments.
		This corpus contains 1600 de-identified EHRs of our affiliated hospital from four different specialties in four departments:
		Cardiology (500), Respiratory (500), Neurology (300) and Gastroenterology (300), and the research had been reviewed and approved by the ethics committee. Named entities are annotated in the BIOES format (Begin, Inside, Outside, End and Single), with 30 types in total. The statistics of \textit{CM-NER} is shown in Table \ref{tab:clinical-corpus}.
		\begin{table}[t]
			\centering
			\small
			\begin{tabular}{l|ccc}
				\toprule
				{Department} &{\# Train} & {\# Dev} &{\# Test}\\
				\midrule
				Cardiology & 3,004 & 601 &601\\
				Respiratory & 3,025 & 605 &606 \\
				Neurology & 932 & 187&187\\
				Gastroenterology & 1,517 & 303 &304 \\
				\midrule
				Sum & 8,478 &1,696 &1,698  \\		
				\bottomrule
			\end{tabular}
			\caption{ Sentence numbers for \textit{CM-NER} corpus. }
			\label{tab:clinical-corpus}
		\end{table}

		\begin{table*}
			\centering 
			
			\tiny
			\begin{tabular} {p{3.6cm}|p{0.4cm}p{0.4cm}p{0.4cm}p{0.4cm}p{0.4cm}p{0.4cm}p{0.4cm}p{0.4cm}p{0.4cm}p{0.4cm}p{0.4cm}p{0.6cm}|p{0.6cm}}
				\toprule
				Method&C$\rightarrow$R&C$\rightarrow$N&C$\rightarrow$G&R$\rightarrow$C&R$\rightarrow$N&R$\rightarrow$G&N$\rightarrow$C&N$\rightarrow$R&N$\rightarrow$G&G$\rightarrow$C&G$\rightarrow$R&G$\rightarrow$N&AVG\\
				\midrule		
				Non-transfer&67.20&54.51&49.01&65.63&54.51&49.01&65.63&67.20&49.01&65.63&67.20&54.51&59.09\\
				Linear projection \cite{peng2017multi}&69.01&67.02&57.40&69.79&65.87&57.71&67.70&68.77&51.33&68.00&69.65&61.12&64.45\\
				
				Domain mask \cite{peng2017multi}&70.76&63.97&58.62&70.18&64.27&58.16&67.93&69.89&56.18&68.87&69.89&63.49&65.18\\
				
				CD-learning \cite{he2017unified}&71.38&64.01&56.72&72.17&64.91&58.14&68.99&71.13&56.27&70.17&71.76&62.06&65.64\\
				Re-training \cite{lee2017transfer}&72.45&70.55&59.58&72.56&68.59&60.94&69.60&70.08&56.58&70.14&71.90&66.01&67.42\\
				Joint-training \cite{yang2017transfer}&69.82&70.49&63.52&71.45&67.03&67.71&70.96&71.43&60.54&69.68&71.55&68.15&68.53\\
				
				\midrule	
				La-MMD&73.08&69.48&59.86&72.53&70.28&60.16&71.31&73.04&57.94&69.80&73.99&67.19&68.22\\
				CRF-L2 &73.34&71.52&60.17&72.43&69.72&67.61&69.76&71.54&59.96&69.75&71.82&67.30&68.74\\
				MMD-CRF-L2&73.05&72.35&60.80&72.65&69.87&66.82&70.25&71.75&58.98&70.48&73.98&67.43&69.03 \\
				La-DTL&\textbf{73.59}$^\dagger$&\textbf{72.91}$^\dagger$&\textbf{64.60}$^\dagger$&\textbf{73.88}$^\dagger$&\textbf{73.01}$^\dagger$&\textbf{70.17}$^\dagger$&\textbf{73.08}$^\dagger$&\textbf{73.11}$^\dagger$&\textbf{62.14}$^\dagger$&\textbf{71.61}$^\dagger$&\textbf{74.21}$^\dagger$&\textbf{71.49}$^\dagger$&\textbf{71.15}\\
				\bottomrule
			\end{tabular}
			
			\caption{Results (F1-score \%) of 12 cross-specialty medical NER tasks. C, R, N, G are short for the department of Cardiology, Respiratory, Neurology, and Gastroenterology, respectively. $\dagger$ indicates La-DTL outperforms the 6 baselines significantly ($p<0.05$).}
			\label{tab:12transfer}
		\end{table*}
		
		\noindent
		{\bf Baselines }
		The following methods are compared. For a fair comparison, we implement La-DTL and baselines with the same base model introduced in \cite{lample2016neural} but with different transfer techniques.
		\begin{itemize}
			\item \textbf{Non-transfer} uses the target domain labeled data only.
			\item \textbf{Domain mask} and \textbf{Linear projection} belong to the same framework proposed by \citet{peng2017multi} but have different implementations at the projection layer, which aims to produce shared feature representations among different domains through a linear transformation.
			\item \textbf{Re-training} is proposed by \citet{lee2017transfer}, where an artificial neural
			networks (ANNs) is first trained on the source domain and then re-trained on the target domain.
			\item \textbf{Joint-training} is a transfer learning method proposed by \citet{yang2017transfer} where different tasks are trained jointly.
			\item \textbf{CD-learning} is a cross-domain learning method proposed by \citet{he2017unified}, where each source domain training example's learning rate is re-weighted. 
		\end{itemize}
		
		\noindent
		{\bf Experimental Settings }
		We use 23,217 unlabeled clinical records to train the word embeddings (word2vec) at 128 dimensions using skip-gram model \cite{mikolov2013distributed}. The hidden state size is set to be 200 for word-level Bi-LSTM. We evaluate La-DTL for cross-specialty NER with \textit{CM-NER} in 12 transfer tasks, results shown in Table \ref{tab:12transfer}.
		For each task, we take the whole source domain training set $\mathcal{D}_s$ and 10\% sentences of the target domain training set $\mathcal{D}_t$ as training data.
		We use the development set in target domain to search hyper-parameters including training epochs. We then take the models to make the prediction in target domain test set and use F1-score as the evaluation metric. Statistical significance has been determined using a randomization version of the paired sample t-test \cite{cohen1995empirical}.

		\noindent
		{\bf Results and Discussion }
		From the results of 12 cross-specialty NER tasks shown in Table \ref{tab:12transfer}, we find that 
		La-DTL outperforms all the strong baselines in all the 12 cross-specialty transfer learning tasks, with 2.62\% to 6.70\%  F1-score lift over state-of-the-art baseline methods.
		Meanwhile, Linear projection and Domain mask \cite{peng2017multi} do not perform as good as other three baselines, which may be because such linear transformation methods are likely to weaken the representations.
		While other three baseline methods all share the whole model between source/target domains but differ in the training schemes and performance.
		
		To better understand the transferability of La-DTL, we also evaluate three variants of La-DTL: La-MMD, CRF-L2, and MMD-CRF-L2. La-MMD and CRF-L2 have the same networks and loss function as La-DTL but with different building blocks: 
		La-MMD has $\beta=0$, while CRF-L2 has $\alpha=0$. In MMD-CRF-L2, we replace La-MMD loss $\mathcal{L}_\text{La-MMD}$ in La-DTL with a vanilla MMD loss:
		\begin{equation}
		\mathcal{L}_\text{MMD}=\text{MMD}^2(\mathcal{R}^s,\mathcal{R}^t), \nonumber
		\end{equation}
		where $\mathcal{R}^s$ and $\mathcal{R}^t$ are sets of hidden representation from source and target domain.
		Results in Table \ref{tab:12transfer} show that: 
		(i) Using La-MMD alone does achieve satisfactory performance since it outperforms the best baseline Joint-training \cite{yang2017transfer} in 7 of 12 tasks. 
		And it has a significant improvement over Domain mask and Linear projection methods \cite{peng2017multi}, which indicates that using La-MMD to reduce the domain discrepancy of feature representations in sequence tagging tasks is promising. 
		(ii) CRF-L2 is also a promising method when transferring between NER tasks, and it improves the La-MMD method significantly when these two methods are combined to form La-DTL. 
		(iii) Label-aware characteristic is important in sequence labeling problems because there is an obvious performance drop when La-MMD is replaced with a vanilla MMD in La-DTL. But MMD-CRF-L2 still has very competitive performance compared to all the baseline methods. This shows positive empirical evidence that transferring knowledge at both Bi-LSTM feature representation level and CRF parameter level for NER tasks is better than transferring knowledge at only one of these two levels, as discussed in Section~\ref{sec:4-1}.

		\begin{figure}[t] 
			\includegraphics[width=0.47\textwidth]{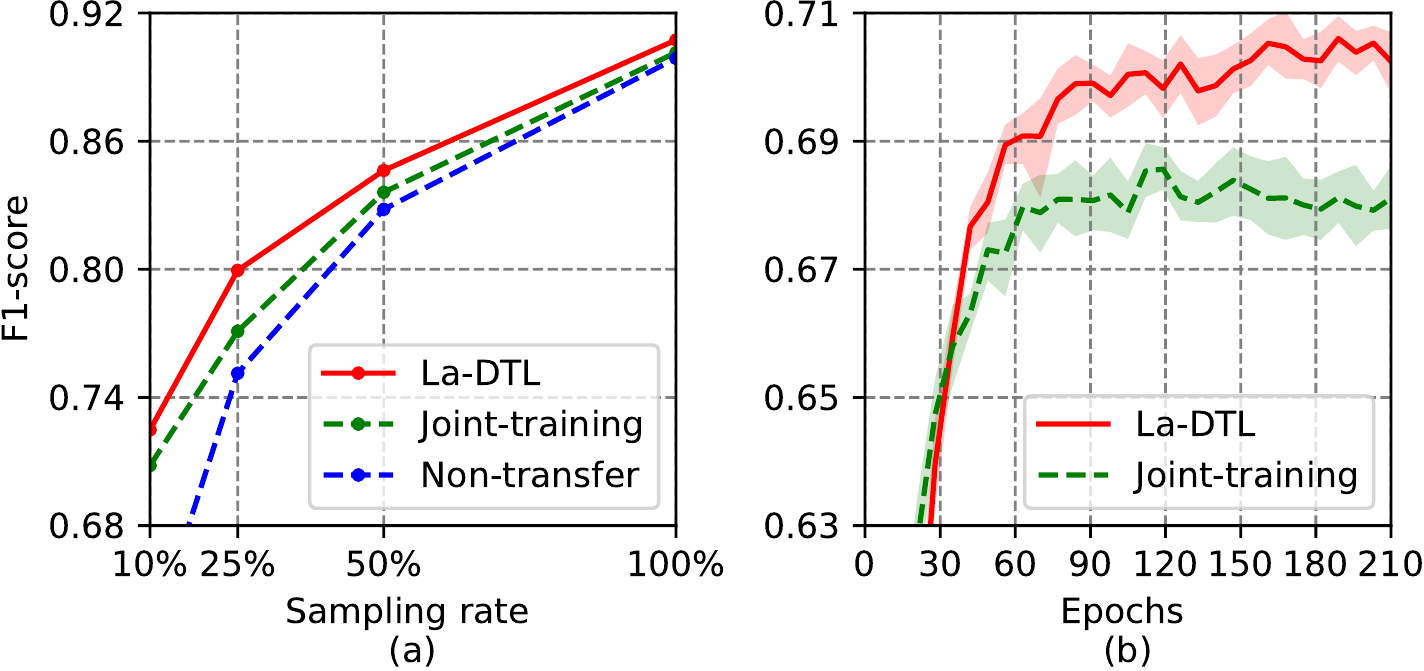} 
			\caption{(a) F1-score of La-DTL, Joint-training and Non-transfer method in C$\rightarrow$R task with different sampling rate. (b) The learning curve of La-DTL and Joint-training in C$\rightarrow$R task.} 
			\label{fig:rate} 
		\end{figure}
		\noindent
		{\bf Robustness to Target Domain Data Sparsity }
		We further study the sparsity problem (target domain) of La-DTL in C$\rightarrow$R task comparing to Joint-training \cite{yang2017transfer} and Non-transfer method. We evaluate La-DTL with different data volume (sampling rate: 10\%, 25\%, 50\%, 100\%) on the target domain training set. Results are shown in Figure~\ref{fig:rate}(a).  We observe that La-DTL outperforms Joint-training and Non-transfer results under all circumstances, and the improvement of La-DTL is more significant when the sampling rate is lower.
		
		To show La-DTL's convergence and significant improvement over Joint-training, we repeat the 10\% sampling rate experiment for 10 times with 10 random seeds.  
		The F1-score on the target domain development set for two methods with a 95\% confidence interval is shown in Figure~\ref{fig:rate}(b) where La-DTL outperforms Joint-training method significantly.
		
		\begin{figure}[t] 
			\includegraphics[width=0.45\textwidth]{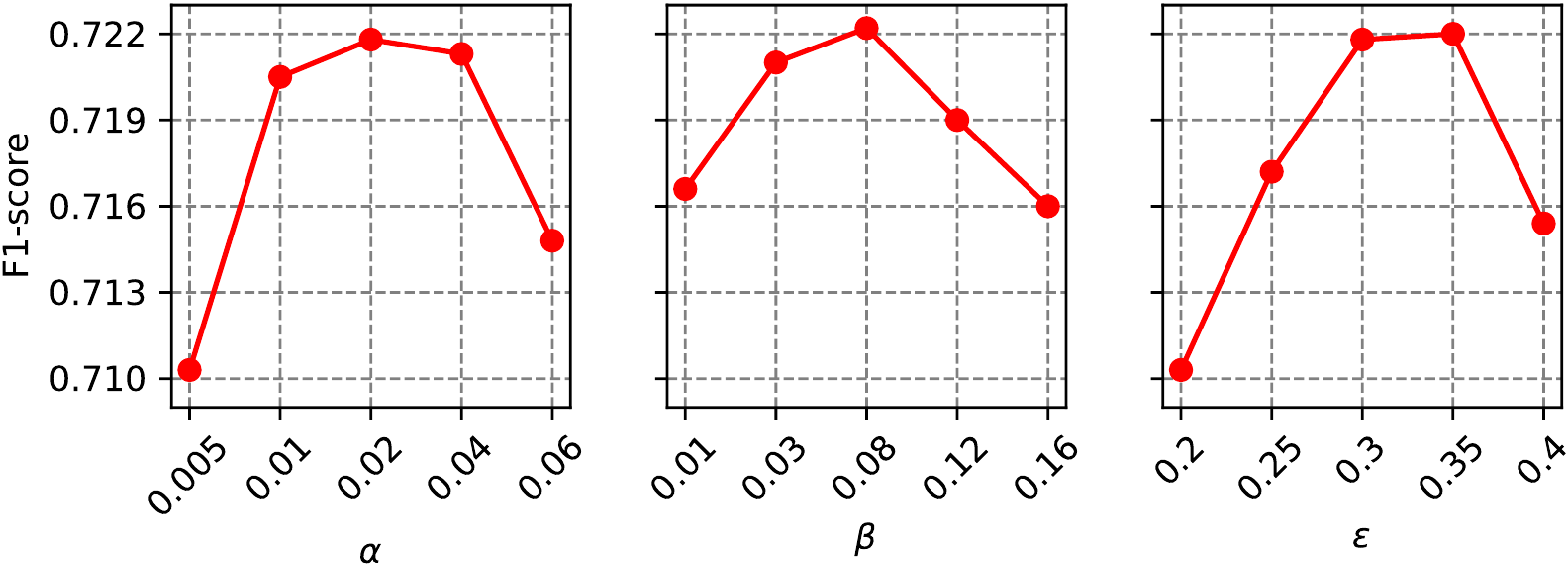} 
			\caption{Hyperparameter study for  $\alpha,\beta,$ and $\varepsilon$.} 
			\label{fig:hyper} 
		\end{figure}
		
		\noindent
		{\bf Hyperparameter Study }
		We study the influence of three key hyperparameters in  La-DTL: $\alpha,\beta,$ and $\varepsilon$ in C$\rightarrow$R task with 10\% target domain sampling rate. 
		We first apply a rough grid search for the three hyperparameters, and the result is ($\alpha=0.02, \beta=0.03, \varepsilon=0.3$). We then fix two hyperparameters and test the third one in a finer granularity. 
		The results in Figure~\ref{fig:hyper} indicate that setting $\alpha \in [0.01,0.04]$ could better leverage La-MMD and further setting $\beta\in [0.03,0.12]$ and $\varepsilon\in [0.3,0.4]$ yields the best empirical performance. This shows that 
		we need to balance the learning objective of the source and target domains for better transferability.

		\subsection{NER Transfer Experiment on Non-medical Corpus}
		To show La-DTL could be applied in a wide range of NER transfer learning scenarios, we make experiments on two non-medical NER tasks. Corpora's details are shown in Table~\ref{tab:sighan}.
		\begin{table}[t]
			\centering 
			\small	
			\begin{tabular}{l|ccc}
				\toprule
				{Corpus} &{\# Train} & {\# Dev} &{\# Test}\\
				\midrule		
				SighanNER & 23,182 &- &4,636  \\
				WeiboNER & 1,350 &270 &270  \\
				CoNLL 2003 & 14,987 &3,466 &3,684  \\
				TwitterNER &1,900  &240 &254  \\
				
				\bottomrule
			\end{tabular}
			\caption{Sentence numbers for non-medical corpora.}
			\label{tab:sighan}
		\end{table}

		\begin{table}
			\centering 
			
			\small
			\begin{tabular} {l|l}
				\toprule		
				Method&F1-score\\
				\midrule
				Non-transfer&54.78\\
				
				Linear projection \cite{peng2017multi}$^*$&56.40\\
				Linear projection \cite{peng2017multi}&56.99\\
				Domain mask \cite{peng2017multi}$^*$&56.80\\
				Domain mask \cite{peng2017multi}&56.32\\
				CD-learning \cite{he2017unified}$^*$&52.05\\
				
				CD-learning \cite{he2017unified}&56.46\\
				Re-training \cite{lee2017transfer}&55.36\\
				Joint-training \cite{yang2017transfer}&56.80\\
				
				\midrule	
				La-DTL&\textbf{57.74}\\
				\bottomrule
			\end{tabular}
			
			\caption{Results (F1-score \%) of \textit{WeiboNER} transfer. \\$*$ indicates the result reported in the corresponding reference.
			}
			\label{tab:weibo}
		\end{table}

		\noindent
		{\bf \textit{WeiboNER} Transfer }
		Following \citet{he2017unified,peng2017multi}, we transfer knowledge from \textit{SighanNER} (MSR corpus of the sixth SIGHAN Workshop on Chinese language processing) to \textit{WeiboNER} (a social media NER corpus) \cite{peng2015named}.
		Results in Table \ref{tab:weibo} show that La-DTL outperforms all the baseline methods in Chinese social media domain.

		\noindent
		{\bf \textit{TwitterNER} Transfer }
		Following \citet{yang2017transfer} we transfer knowledge from CoNLL 2003 English NER \cite{tjong2003introduction} to \textit{TwitterNER} \cite{ritter2011named}. Since the entity types in these two corpora cannot be exactly matched, La-DTL and Joint-training \cite{yang2017transfer} can be applied directly in this case while other baselines can not. Because the CRF parameter transfer of La-DTL is label-aware, and Joint-training simply leverages two independent CRF layers.
		The results are shown in Table \ref{tab:twitter}, where La-DTL again outperforms Joint-training, indicating that La-DTL could be applied seamlessly to transfer learning scenarios with mismatched label sets and languages like English.
		\begin{table}
			\centering 
			
			\small
			\begin{tabular} {l|l}
				\toprule		
				Method&F1-score\\
				\midrule
				Non-transfer &34.65\\
				Joint-training \cite{yang2017transfer}$^*$&43.24\\
				
				\midrule	
				La-DTL&\textbf{45.71}\\
				\bottomrule
			\end{tabular}
			
			\caption{Results (F1-score \%) of \textit{TwitterNER} transfer. \\$*$ indicates the result reported in the corresponding reference.}
			\label{tab:twitter}
		\end{table}

		\section{Conclusions}
		In this paper, we propose La-DTL, a label-aware double transfer learning framework, to conduct both Bi-LSTM feature representation transfer and CRF parameter transfer with label-aware constraints for cross-specialty medical NER tasks. To our best knowledge, this is the first work on transfer learning for medical NER in cross-specialty scenario. Experiments on 12 cross-specialty NER tasks show that La-DTL provides consistent performance improvement over strong baselines. We further perform a set of experiments on different target domain data size, hyperparameter study and other non-medical NER tasks, where La-DTL shows great robustness and wide efficacy. For future work, we plan to jointly perform NER and entity linking for better cross-specialty media structural information extraction.
		
		\section*{Acknowledgments}
		The work done by SJTU is sponsored by Synyi-SJTU Innovation Program, National Natural Science Foundation of China (61632017, 61702327, 61772333) and Shanghai Sailing Program (17YF1428200).
		\bibliography{crf_tl}
		
		\bibliographystyle{acl_natbib}
		
		\newpage
		\appendix
		\section{Appendix}
		\subsection{Detailed Proof} \label{sec:a}

		Recall the bound as in Eq.~(\ref{eq:tl-kl-bound}):
		~\\
		~\\
		
		\begin{strip}

			\begin{lemma}\label{lemma:score}  
				$c_1(\Arrowvert\bs{W}^s-\bs{W}^t\Arrowvert^2_2+\Arrowvert\bs{A}^s-\bs{A}^t\Arrowvert^2_2 )$ is the upper bound of $(s^s(\bH,\by)-s^t(\bH,\by))^2$.
			\end{lemma}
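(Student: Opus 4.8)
The plan is to exploit the fact that the CRF score $s_{\theta_c}(\bH,\by)$ is \emph{linear} in its parameters $\theta_c=\{\bs{W},\bs{A}\}$, so the score difference is a linear form in the parameter differences $\Delta\bs{W}=\bs{W}^s-\bs{W}^t$ and $\Delta\bs{A}=\bs{A}^s-\bs{A}^t$, and the squared bound then follows from a single application of Cauchy--Schwarz. First I would expand the difference using the score definition and the relation $\bs{E}=\bH\bs{W}$. Writing $\bs{w}_l$ for the $l$-th column of $\bs{W}$, the emission term at position $i$ is $\bE_{i,y_i}=\bh_i^\top\bs{w}_{y_i}$, so
\begin{small}
\begin{equation}
s^s(\bH,\by)-s^t(\bH,\by)=\sum_{i=1}^{n}\bh_i^\top(\bs{w}^s_{y_i}-\bs{w}^t_{y_i})+\sum_{i=1}^{n-1}(\bA^s_{y_i,y_{i+1}}-\bA^t_{y_i,y_{i+1}}).\nonumber
\end{equation}
\end{small}

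Next I would rewrite this as one inner product $\langle\bs{c},\bs{d}\rangle$, where $\bs{d}$ stacks the vectorizations of $\Delta\bs{W}$ and $\Delta\bs{A}$, so that $\|\bs{d}\|^2=\|\bs{W}^s-\bs{W}^t\|_2^2+\|\bs{A}^s-\bs{A}^t\|_2^2$, and $\bs{c}$ collects the data-dependent coefficients: the coefficient multiplying $\Delta\bs{W}_{k,l}$ is $\sum_{i:\,y_i=l}\bh_{i,k}$ (the aggregated $k$-th hidden component over positions labelled $l$), and the coefficient multiplying $\Delta\bs{A}_{p,q}$ is the number of $p\to q$ transitions along $\by$. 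Crucially, $\bs{c}$ depends only on the given $\bH$ and $\by$, not on the CRF parameters. Cauchy--Schwarz then yields
\begin{small}
\begin{equation}
\big(s^s(\bH,\by)-s^t(\bH,\by)\big)^2=\langle\bs{c},\bs{d}\rangle^2\le\|\bs{c}\|^2\big(\|\bs{W}^s-\bs{W}^t\|_2^2+\|\bs{A}^s-\bs{A}^t\|_2^2\big),\nonumber
\end{equation}
\end{small}
so the lemma holds with $c_1=\|\bs{c}\|^2$.

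It remains to argue that $c_1$ is genuinely a finite constant. This is immediate because the coefficient vector $\bs{c}$ is built only from the fixed hidden representations $\bh_i$ and the finite path $\by$: each Bi-LSTM hidden component is bounded (the $\tanh$ output gate confines each entry to $(-1,1)$) and the sequence length $n$ is finite, so both $\sum_{k,l}\big(\sum_{i:\,y_i=l}\bh_{i,k}\big)^2$ and the sum of squared transition counts are finite and independent of $\bs{W}^s,\bs{W}^t,\bs{A}^s,\bs{A}^t$. I expect the only delicate step to be the bookkeeping in the second paragraph: because a label (and a transition pair) may recur along the path, one must aggregate the repeated contributions into a single coefficient vector carefully, so that the norms $\|\bs{W}^s-\bs{W}^t\|_2^2$ and $\|\bs{A}^s-\bs{A}^t\|_2^2$ — rather than a cross term — emerge on the right-hand side. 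As an alternative that sidesteps this aggregation, one could split the two sums with $(a+b)^2\le 2a^2+2b^2$, bound each by a per-term Cauchy--Schwarz, and absorb the resulting multiplicity factors (each at most $n$) into a looser constant $c_1$; this is less tight but avoids vectorizing the whole form at once.
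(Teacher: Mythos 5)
Your proof is correct and follows essentially the same route as the paper's: both exploit the linearity of the CRF score in $(\bs{W},\bs{A})$, collect the data-dependent coefficients into fixed arrays (your $\bs{c}$ is exactly the paper's mask matrices $\vec{H}^W,\vec{H}^A$, with entries $\sum_{i:\,y_i=l}\bh_{i,k}$ and transition counts), and conclude by Cauchy--Schwarz with the coefficient norms absorbed into the constant $c_1$. The only difference is cosmetic: the paper first splits via $(a+b)^2\le 2a^2+2b^2$ and applies Cauchy--Schwarz to each block separately, whereas your single stacked Cauchy--Schwarz avoids the factor of $2$ and yields the marginally tighter constant $c_1=\Arrowvert\vec{H}^W\Arrowvert^2_2+\Arrowvert\vec{H}^A\Arrowvert^2_2$.
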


			\begin{proof}[Proof of Lemma \ref{lemma:score}]
				$\otimes$ refers to convolutional product, $\vec{H}^W,\vec{H}^A$ are mask matrices corresponding to the given hidden vectors $\vec{H}$, and $c_1$ is a constant. We have:
				\begin{align}\label{lemma:score-l2}
				& (s^s(\bH,\by)-s^t(\bH,\by))^2\nonumber\\
				= & (\sum_{i=1}^{n}\bE_{i, y_i}^s + \sum_{i=1}^{n-1}\bA_{y_i, y_{i+1}}^s-\sum_{i=1}^{n}\bE_{i, y_i}^t - \sum_{i=1}^{n-1}\bA_{y_i, y_{i+1}}^t)^2 \nonumber\\
				= & (\bs{W}^s\otimes\vec{H}^W+\bs{A}^s\otimes\vec{H}^A-\bs{W}^t\otimes\vec{H}^W-\bs{A}^t\otimes\vec{H}^A)^2 \nonumber\\
				= & ((\bs{W}^s-\bs{W}^t)\otimes\vec{H}^W+(\bs{A}^s-\bs{A}^t)\otimes\vec{H}^A)^2 \nonumber\\
				\le &  2((\bs{W}^s-\bs{W}^t)\otimes\vec{H}^W)^2+2((\bs{A}^s-\bs{A}^t)\otimes\vec{H}^A)^2 \nonumber\\
				= & 2(\sum_{i,j}(\bs{W}^s-\bs{W}^t)_{i,j}\cdot\vec{H}^W_{i,j})^2+ \nonumber  2(\sum_{p,q}(\bs{A}^s-\bs{A}^t)_{p,q}\cdot\vec{H}^A_{p,q})^2 \nonumber\\
				\le & 2(\sum_{i,j}(\bs{W}^s-\bs{W}^t)^2_{i,j}\cdot\sum_{i,j}(\vec{H}^W_{i,j})^2)+ 2(\sum_{p,q}(\bs{A}^s-\bs{A}^t)^2_{p,q}\cdot\sum_{p,q}(\vec{H}^A_{p,q})^2)\nonumber\\
				= & 2(\Arrowvert\bs{W}^s-\bs{W}^t\Arrowvert^2_2\cdot\Arrowvert\vec{H}^W\Arrowvert^2_2)+2(\Arrowvert\bs{A}^s-\bs{A}^t\Arrowvert^2_2\cdot\Arrowvert\vec{H}^A\Arrowvert^2_2)\nonumber\\
				\le & c_1(\Arrowvert\bs{W}^s-\bs{W}^t\Arrowvert^2_2+\Arrowvert\bs{A}^s-\bs{A}^t\Arrowvert^2_2 ).\nonumber
				\end{align}
			\end{proof}
			
			\begin{lemma}\label{lemma:kl}  
				$c(\Arrowvert\bs{W}^s-\bs{W}^t\Arrowvert^2_2+\Arrowvert\bs{A}^s-\bs{A}^t\Arrowvert^2_2 )^{\frac{1}{2}}$ is the upper bound of $D_{\text{KL}}(p^s(\by|\bH)||p^t(\by|\bH))$. 
			\end{lemma}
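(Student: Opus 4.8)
The plan is to reduce the divergence to the pointwise score gap already controlled by Lemma~\ref{lemma:score}, and then to dispatch the two log-partition functions separately. Writing $R=(\Arrowvert\bs{W}^s-\bs{W}^t\Arrowvert^2_2+\Arrowvert\bs{A}^s-\bs{A}^t\Arrowvert^2_2)^{1/2}$, Lemma~\ref{lemma:score} gives $(s^s(\bH,\by)-s^t(\bH,\by))^2\le c_1 R^2$ for \emph{every} label sequence $\by$, so the first thing I would record is the uniform bound $|s^s(\bH,\by)-s^t(\bH,\by)|\le\sqrt{c_1}\,R$. The fact that this holds simultaneously over all $\by$ is exactly what drives the argument.

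First I would rewrite $D_{\text{KL}}$ in terms of scores. Substituting $\log p(\by|\bH)=s(\bH,\by)-\log Z(\bH)$ from Eq.~(\ref{eq:pyh}) into the entropy/cross-entropy expansion already displayed above, the $\log Z$ factors pull out of the sum over $\by$ (each is a $\by$-independent normalizer) and one obtains
\begin{align}
D_{\text{KL}}(p^s\|p^t)=\sum_{\by} p^s(\by|\bH)\big(s^s(\bH,\by)-s^t(\bH,\by)\big)+\log Z^t(\bH)-\log Z^s(\bH). \nonumber
\end{align}
The first summand is an expectation under $p^s$ of a quantity bounded in absolute value by $\sqrt{c_1}\,R$, hence is itself at most $\sqrt{c_1}\,R$.

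The remaining and main step is to bound the partition-function ratio $\log\big(Z^t(\bH)/Z^s(\bH)\big)$. Here I would write $s^t=s^s-(s^s-s^t)$ inside $Z^t(\bH)=\sum_{\by}\exp\{s^t(\bH,\by)\}$ and use $-(s^s-s^t)\le\sqrt{c_1}\,R$ termwise to factor $\exp\{\sqrt{c_1}\,R\}$ out of the sum, giving $Z^t(\bH)\le\exp\{\sqrt{c_1}\,R\}\,Z^s(\bH)$ and therefore $\log\big(Z^t(\bH)/Z^s(\bH)\big)\le\sqrt{c_1}\,R$. Adding the two pieces yields $D_{\text{KL}}(p^s\|p^t)\le 2\sqrt{c_1}\,R=c\,R$ with $c=2\sqrt{c_1}$, which is precisely the claimed bound $c\,(\Arrowvert\bs{W}^s-\bs{W}^t\Arrowvert^2_2+\Arrowvert\bs{A}^s-\bs{A}^t\Arrowvert^2_2)^{1/2}$.

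The hard part is this partition-function term: if one keeps the divergence split into an entropy $-H(p^s)$ and a cross-entropy, neither piece admits a clean bound, and a parameter-space estimate on $\Arrowvert\bs{W}^s-\bs{W}^t\Arrowvert_2$ tells us nothing directly about a log-sum-exp quantity. The resolution is to never separate entropy from normalizer but to keep the score-difference form, so that the single uniform estimate from Lemma~\ref{lemma:score} simultaneously dominates the expectation and the $\log$-ratio by the same $\sqrt{c_1}\,R$. Note also that the square-root exponent appearing in the final bound is inherited exactly from taking the square root of the quadratic estimate in Lemma~\ref{lemma:score}, which is why the parameter penalty enters as a norm rather than a squared norm.
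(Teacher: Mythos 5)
Your proof is correct and takes essentially the same route as the paper's: both extract from Lemma~\ref{lemma:score} the uniform score gap $\varepsilon=\sqrt{c_1}\,R$ and the matching log-partition-function bound (the paper's Eq.~(\ref{eq:7}), which is obtained exactly by your termwise factoring of $e^{\varepsilon}$ out of $Z^t$), then add the two contributions to reach $2\sqrt{c_1}\,R=cR$ with $c=2c_1^{1/2}$. The only difference is bookkeeping --- the paper keeps the entropy/cross-entropy split and bounds the cross-entropy by $H(p^s)+2\varepsilon$ so that the entropy cancels, which, contrary to your closing remark that this split admits no clean bound, works fine and is algebraically identical to your decomposition into the score-gap expectation plus $\log\big(Z^t(\bH)/Z^s(\bH)\big)$.
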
  
			
			\begin{proof}[Proof of Lemma \ref{lemma:kl}]
				
				With Lemma.~(\ref{lemma:score}), we set $\varepsilon=(c_1 (\Arrowvert\bs{W}^s-\bs{W}^t\Arrowvert^2_2+\Arrowvert\bs{A}^s-\bs{A}^t\Arrowvert^2_2 ))^\frac{1}{2}\ge0$ and $c=2c_1^{\frac{1}{2}}$, and we have:
				\begin{equation}\label{eq:6}
				s^s(\bH,\by)-\varepsilon\le s^t(\bH,\by)\le s^s(\bH,\by)
				+\varepsilon ,
				\end{equation}
				\begin{equation}\label{eq:7}
				\log \{\sum_{\by'\in\mathcal{Y}(\bH)} \exp[s^s(\bH,\by')]\} -\varepsilon\le 
				\log \{\sum_{\by'\in\mathcal{Y}(\bH)} \exp[s^t(\bH,\by')]\} \le \log \{\sum_{\by'\in\mathcal{Y}(\bH)} \exp[s^s(\bH,\by')]\} +\varepsilon .
				\end{equation}
				
				With Eq.~(\ref{eq:6}) and Eq.~(\ref{eq:7}), we can derive
				\begin{align}
				&- \sum_{\by\in\mathcal{Y}(\bH)} p^s(\by|\bH) \log p^t(\by|\bH)\nonumber\\
				=&-\sum_{\by\in\mathcal{Y}(\bH)} p^s(\by|\bH) \log \frac{\exp[s^t(\bH,\by)]}{\sum_{\by'\in\mathcal{Y}(\bH)} \exp[s^t(\bH,\by')]}\nonumber \\
				= &-\sum_{\by\in\mathcal{Y}(\bH)} p^s(\by|\bH) \big\{ {s^t(\bH,\by)}- {\log \{\sum_{\by'\in\mathcal{Y}(\bH)} \exp[s^t(\bH,\by')]\}}\big\} \nonumber \\
				\le &-\sum_{\by\in\mathcal{Y}(\bH)} p^s(\by|\bH) \big\{ {s^s(\bH,\by)-\varepsilon}- {\log \{\sum_{\by'\in\mathcal{Y}(\bH)} \exp[s^s(\bH,\by')]\}}  -\varepsilon \big\} \nonumber \\
				= &-\sum_{\by\in\mathcal{Y}(\bH)} p^s(\by|\bH) \big\{ \log \frac{\exp[s^s(\bH,\by)]}{\sum_{\by'\in\mathcal{Y}(\bH)} \exp[s^s(\bH,\by')]}{-2\varepsilon}\big\} \nonumber \\
				= &-\sum_{\by\in\mathcal{Y}(\bH)} p^s(\by|\bH) \big\{ \log p^s(\by|\bH) {-2\varepsilon}\big\} \nonumber \\
				= & H(p^s(\by|\bH))+2\varepsilon. \nonumber \\
				\nonumber
				\end{align}
				
				Finally, we have
				\begin{align}
				&D_{\text{KL}}(p^s(\by|\bH)||p^t(\by|\bH))\nonumber\\
				=&\sum_{\by\in\mathcal{Y}(\bH)} p^s(\by|\bH) \log (\frac{p^s(\by|\bH)}{p^t(\by|\bH)})\nonumber\\
				=&-H(p^s(\by|\bH))-\sum_{\by\in\mathcal{Y}(\bH)} p^s(\by|\bH) \log p^t(\by|\bH) \nonumber\\
				\le&-H(p^s(\by|\bH))+H(p^s(\by|\bH))+2\varepsilon\nonumber\\
				=&c(\Arrowvert\bs{W}^s-\bs{W}^t\Arrowvert^2_2+\Arrowvert\bs{A}^s-\bs{A}^t\Arrowvert^2_2 )^{\frac{1}{2}}.
				\nonumber
				\nonumber
				\end{align}
				
			\end{proof}
			
		\end{strip}

		\subsection{Case Analysis}
		
		\begin{table*}[!htb]
			\small
			
			\begin{tabular}{c|c|c|c|c|c|c}
				\toprule
				Disease & \tabincell{c}{Transfer\\ Task } & \tabincell{c}{\# disease term in\\ source  domain\\  training set} & \tabincell{c}{\# disease term in\\ target domain \\  training set }& \tabincell{c}{\# disease term in\\ target domain\\   test set}& \tabincell{c}{\# accurate  \\labeling  \\without transfer} & \tabincell{c}{\# accurate  \\ labeling \\with transfer}\\
				\midrule
				\multirow{3}{*}{\tabincell{c}{rheumatic \\heart disease}} & C$\rightarrow$G&17 & \multirow{3}{*}{0}& \multirow{3}{*}{3}& \multirow{3}{*}{0}&3 \\
				&N$\rightarrow$G &0 && & &0 \\
				& R$\rightarrow$G&16 && & &3 \\
				\midrule
				\multirow{3}{*}{\tabincell{c}{pulmonary\\ heart disease}} & C$\rightarrow$G&4 & \multirow{3}{*}{0}& \multirow{3}{*}{2}& \multirow{3}{*}{0}&2 \\
				& N$\rightarrow$G&0 && & &0 \\
				& R$\rightarrow$G&24 && & &2 \\
				\midrule
				\multirow{3}{*}{\tabincell{c}{coronary \\atherosclerotic\\heart disease}} & G$\rightarrow$N& 5& \multirow{3}{*}{0}& \multirow{3}{*}{15}& \multirow{3}{*}{10}&3 \\
				& C$\rightarrow$N& 136&& & &15 \\
				& R$\rightarrow$N& 23&& & &11 \\
				\bottomrule
			\end{tabular}
			\caption{Case analysis for cross-specialty medical NER tasks. C, R, N, G are short for department of Cardiology, Respiratory, Neurology, and Gastroenterology, respectively. }
			\label{tab:case}
		\end{table*}

		In clinical practice, patients with specific diseases would be assigned to different departments, and specialist doctors in their department may pay more attention to the specific disease. When writing a medical chart, these specific diseases and related clinical findings would have a more detailed description. Therefore, some medical terms would have enriched meanings in different departments accordingly. For example, patients with rheumatic heart disease are often treated in the department of Cardiology. The term, ``rheumatic'', a modifier, describes and limits the type of ``heart disease''. In English, ``rheumatic'' is an adjective modifying ``heart disease''. However, in Chinese, ``rheumatic heart disease'' can be regarded as two diseases, ``rheumatism'' and ``heart disease''. In the department of Cardiology, ``rheumatic heart disease'' is usually mentioned as a single term. While in other departments, ``rheumatism'' and ``heart disease'' are mostly two independent named entities in annotated datasets. As such, it is difficult to train an NER model to capture the relationship between ``rheumatism'' and ``heart disease'', and band them as a whole.  In the training set of our study, the diagnostic term ``rheumatic heart disease'' (including synonym) is mentioned for 17 times in Dept. Cardiology, 16 times in Dept. Respiratory, none in Dept. Neurology and 3 times in Dept. Gastroenterology. We use the data from the first 3 departments as source domain training set respectively, and the data from Dept. Gastroenterology as the target domain training set. We test our models on the test set from Dept. Gastroenterology, where  ``rheumatic heart disease'' is mentioned 3 times, and compare the results across models with/without transfer learning. As expected, models with source training data from Dept. Cardiovascular and Respiration correctly predict all these entities, but the model using source data from Dept. Neurology fails and so does a model without transfer learning.
		
		Patients with pulmonary heart disease were often referred to Dept. Respiratory and Dept. Cardiology. In our training set, ``pulmonary heart disease'' (including synonym) is labeled for 24 times in Dept. Respiratory and 4 times in Dept. Cardiology. In English, ``pulmonary'' modified ``heart disease''. In Chinese, ``pulmonary heart disease'' contains body structure ``lung'' and disease name ``heart disease''. The model trained with the source set from both from department of respiratory and cardiology could correctly recognize the relation between lung and heart disease and predict the entity in the test set from Dept. Gastroenterology.
		
		Similarly, ``coronary atherosclerotic heart disease'' contains two disease names, ``coronary atherosclerosis'' and ``heart disease''. Training model using source set from a department where the terms are enriched could improve the performance of recognizing the whole entity.
		
		\subsection{Medical Experiments Details}
		The 30 entity types for medical domain are: Symptom, Disease, Examination, Treatment, Laboratory index, Products, Body structure, Frequency, Negative word, Value, Trend, Modification, Temporal word, Noun of locality, Degree modifier, Probability, Object, Organism, Location, Person, Pronoun, Privacy information, Accident, Action, Header, Instrument and material, Non-physiological structure, Dosage, Scale, and Preposition.
		
		\subsection{Non-medical Experiments Details}
		
		\subsubsection*{\textit{WeiboNER} Transfer}
		Both \textit{SighanNER} and \textit{WeiboNER} are annotated in the BIO format (Begin, Inside and Outside), but there is one more entity type (geo-political) in \textit{WeiboNER}. For a fair comparison, we follow \citet{peng2017multi,he2017unified} to merge geo-political entities and locations in \textit{WeiboNER}, to match different labeling schemes between \textit{WeiboNER} and \textit{SighanNER}. We use the inconsistencies fixed second version of \textit{WeiboNER} data and word embeddings provided by \textit{WeiboNER}'s developers \cite{peng2015named}\footnote{\scriptsize \url{https://github.com/hltcoe/golden-horse}} in this experiment. 
		
		\subsubsection*{\textit{TwitterNER} Transfer}
		
		\begin{figure}[t]
			\centering
			\includegraphics[width=0.3\textwidth]{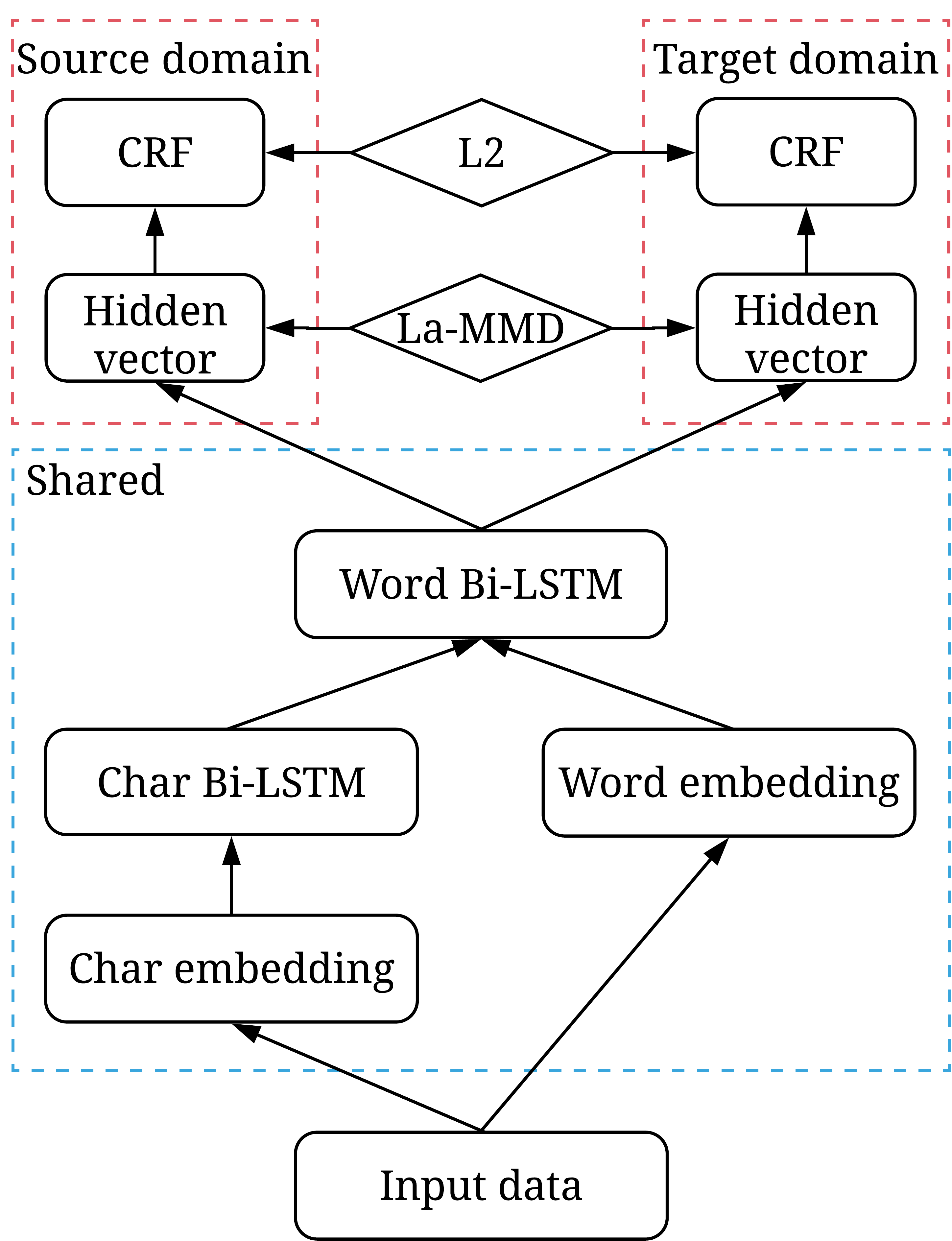}
			\caption{La-DTL framework for language like English.
			}
			\label{fig:eng_model}
		\end{figure}
		
		To show that La-DTL could be applied in transfer learning for NER scenario with mismatched named entity types and languages like English, we conduct this experiment transfer from CoNLL 2003 English NER to \textit{TwitterNER}. The four entity types in CoNLL 2003 English NER are LOC, PER, ORG, and MISC. The ten entity types in \textit{TwitterNER} are company, facility, geo-loc, movie, musicartist, other, person, product, sportsteam, and tvshow. 
		
		The Joint-training method \cite{yang2017transfer} separates the CRF layers for each domain to bypass the label mismatch problem. Since our La-DTL is label-aware, we match four pairs of named entities between two CoNLL 2003 English NER and \textit{TwitterNER}: LOC with geo-loc, PER with person, ORG with company and MISC with other to compute $\mL_\text{La-MMD}$ and $\mL_{p}$, and leave six named entities unmatched. Following \citet{yang2017transfer}, We leverage char-level Bi-LSTM to generate better word representations, concatenate it with pre-trained word embeddings and feed concatenated embeddings to the word-level Bi-LSTM. 
		The framework used for language like English is illustrated in Figure~\ref{fig:eng_model}.
		
		We also convert all characters to lowercase and use the same word embeddings provided by \citet{yang2017transfer}\footnote{\scriptsize \url{https://github.com/kimiyoung/transfer}}. Also, we concatenate the training set and the development set for both domains and sample the same 10\% from \textit{TwitterNER} as \cite{yang2017transfer} to be target domain training data. Since \citet{yang2017transfer} merge training and development set into training data, both \citet{yang2017transfer} and we report the best performance in the target domain test set.

	\end{document}